\documentclass[conference]{ieeeconf}    

\usepackage{hyperref}
\usepackage{graphicx}
\usepackage{amsmath}
\usepackage{amssymb}
\usepackage{algorithm}
\usepackage{algorithmic}
\usepackage[utf8]{inputenc}
\usepackage[english]{babel}

\newtheorem{lemma}{Lemma}

\IEEEoverridecommandlockouts
\overrideIEEEmargins

\title{\LARGE \bf
    Inferring the Optimal Policy using Markov Chain Monte Carlo
}

\author{
Brandon Trabucco$^{1}$, Albert Qu$^{1, 2}$, Simon Li$^{1, 3}$, Ganeshkumar Ashokavardhanan$^{1}$\\
University of California, Berkeley\\
Department of \{Computer Science$^{1}$, Neuroscience$^{2}$, Math$^{3}$\}\\
\texttt{\{btrabucco, albert\_qu, xuanlinli17, ganeshkumar\}@berkeley.edu}
}

\begin{document}

\maketitle

\begin{abstract}

This paper investigates methods for estimating the optimal stochastic control policy for a Markov Decision Process with unknown transition dynamics and an unknown reward function. This form of model-free reinforcement learning comprises many real world systems such as playing video games, simulated control tasks, and real robot locomotion. Existing methods for estimating the optimal stochastic control policy rely on high variance estimates of the policy descent. However, these methods are not guaranteed to find the optimal stochastic policy, and the high variance gradient estimates make convergence unstable. In order to resolve these problems, we propose a technique using Markov Chain Monte Carlo to generate samples from the posterior distribution of the parameters conditioned on being optimal. Our method provably converges to the globally optimal stochastic policy, and empirically similar variance compared to the policy gradient. 

\end{abstract}

\section{Introduction}

Policy gradient techniques using neural networks are currently widely used methods for learning to interact with model-free environments. In these environments, the transition dynamics and the reward function are unknown, and the agent receives a reward signal for taking an action when given a state. These agents are able to learn complex control policies directly from reward signals. Examples of these control policies in recent literature include human level performance on Atari games \cite{humanlevel}, simulated environments \cite{ddpg, trpo}, and robot locomotion \cite{visiomotor}. These methods optimize a stochastic policy using a high variance estimate of the local gradient under the current policy of the expected future reward with respect to the policy parameters.

Current successful policy gradient algorithms fall into two broad categories: \textbf{actor-only} and \textbf{actor-critic}. In the \textbf{actor-only} scheme, the samples of the reward from an episode are used to compute the expected future reward directly. The log probability of actions is increased by an amount proportional to the expected future reward of taking that action. In this way, on average, actions that achieve higher expected future reward will be taken with higher probability \cite{reinforce}. In contrast, the \textbf{actor-critic} scheme fits a model of the expected future reward to samples of the current reward from the environment. This reward model is used to calculate the policy gradient update stated previously. Using the critic estimate of the reward serves to reduce variance in the traditional policy gradient algorithm. \cite{gae}. 

The traditional policy gradient algorithm inherits the limitations of stochastic gradient descent, and is susceptible to local minima. This arises in practice when policies with different random seeds finding different sub-optimal solutions to a control task. This problem exists in part because the environment has not been fully explored, and the model has learned to exploit the known high reward trajectories to which it has already been exposed. Both \textbf{actor-only} and \textbf{actor-critic} versions of the policy gradient suffer from the trade-off of exploration and exploitation. Algorithms that guarantee optimality, and therefore guarantee appropriate exploration of the environment are desirable to the reinforcement learning community.

In this paper, we hope to address the tendency of model-free reinforcement learning to converge to the local minima of the expected future reward. In particular, we make the following contributions:

\begin{enumerate}
    
    \item{ A closed form expression for the posterior distribution of the policy parameters that satisfy an optimality condition using Bayesian inference. }

    \item{ An extension of the Metropolis-Hastings variant of Markov Chain Monte Carlo for sampling these policy parameters with high probability and comparable variance to the policy gradient. }
    
\end{enumerate}

\section{ Related Work }

This work is not the first to apply Markov Chain Monte Carlo sampling methods for neural networks. Radford M. Neal provides an analysis of Bayesian learning for neural networks in his 1996 thesis, wherein he experiments with Metropolis Hastings and Gibbs sampling. In Neal's formulation, the posterior distributions of the parameters $\theta$ given the dataset of inputs and labels $D$ is approximated using Markov Chain Monte Carlo \cite{neal1996}. 

More recent work on Bayesian learning for neural networks focuses more on making the Metropolis Hastings update more efficient using mini-batches. Li, D. et al. prove that mini-batch Metropolis Hastings produces samples from the true posterior raised to a known temperature \cite{li2017}. Other work by Chen, H. et al. demonstrates that using mini-batches achieves several order-of-magnitude speedups from original Metropolis Hastings \cite{chen2016}. 

Using mini-batches to estimate the posterior probability also introduces variance. This is addressed by Baker, J. et al. where control variates are used to modify the dynamics of the Metropolis Hastings to reduce variance \cite{baker2017}. This is further explored by Gong, W. et al. where a meta-learning algorithm is proposed that efficiently optimizes for these control variates \cite{gong2018}. 

Our method differs from previous work in that we employ mini-batch Metropolis Hastings to solve for the optimal policy under a Markov Decision Process, where the typical "dataset" used to calculate the posterior in previous work is replaced with a reward signal. To the best of our knowledge, we are the first to apply Markov Chain Monte Carlo to solve Markov Decision Processes.

\section{Methodology}

\subsection{Theory}

Let $\mathcal{O}$ be the event when the policy $\pi$ is optimal. We can derive the optimal policy using Bayesian inference and integrate out the parameters $\theta$. We make the assumption that the next action $s$ is conditionally independent of $\mathcal{O}$ given some parameters $\theta$.

\begin{equation}
    \pi ( a | s, \mathcal{O} ) = \int \pi ( a | s, \theta ) f( \theta | \mathcal{O} ) d\theta
\end{equation}

Let us  define  the  sampling  distribution density $f(\theta | \mathcal{O})$ using Bayes Rule and the prior $f(\theta)$

\begin{equation}
    f( \theta | \mathcal{O} )  = \frac{f( \theta ) Pr( \mathcal{O} | \theta )}{ \mathbb{E}_{\theta' \sim f(\theta')} [ Pr(\mathcal{O} | \theta') ]}
\end{equation}

The optimal policy achieves maximum expected reward, and so we can define the probability of being optimal $\mathcal{O}$ given the current policy parameters $\theta$ by the continuous extension of the softmax function (Boltzmann distribution \cite{Li6611}) and a normalizing constant $A$.

\begin{equation}
    Pr( \mathcal{O} | \theta ) = \lim_{T \rightarrow 0} \frac{e^{\frac{\hat r_{\theta}}{T}}A}{\int_{\theta'} e^{\frac{\hat r_{\theta}}{T}}d\theta'}  = \lim_{T \rightarrow 0} e^{\frac{\hat r_{\theta}}{T}} Z(T)^{-1}
\end{equation}

The likelihood of being optimal is proportional to the exponential of the expected utility $\hat r_{\theta}$ of policy $\pi$, divided by some temperature constant $T$.

\begin{equation}
    f( \theta | \mathcal{O} ) = \frac{f( \theta ) e^{ \frac{\hat r_{\theta} }{ T} } Z(T)^{-1} }{\int_{\theta'} e^{ \frac{\hat r_{\theta '} }{ T} } Z(T)^{-1} f(\theta')d\theta'} \propto f( \theta ) e^{ \frac{\hat r_{\theta} }{ T} } 
\end{equation}

To evaluate $f(\theta | \mathcal{O})$, we need to first evaluate $\hat r_{\theta}$, which could be approximated by the following two methods:
\begin{enumerate}
    \item{ If we are sampling by executing actions from the distribution $\pi ( a | s, \theta )$ (on policy), we can compute the expected utility directly.
    $$ \hat r_{\theta} = \mathbb{E}_{s, a, s' \sim Pr(s) \pi ( a | s, \theta ) Pr(s' | s, a)} [ R (s, a, s') ] $$}
    
    \item{ If we are sampling from the environment using actions from the distribution $Pr(a | s)$ (off policy), we can compute the expected utility using importance sampling.
    $$ \hat r_{\theta} = \mathbb{E}_{s, a, s' \sim Pr(s) \pi ( a | s, \theta ) Pr(s' | s, a)} [ \frac{Pr(a | s)}{\pi ( a | s, \theta )} R (s, a, s') ] $$}
\end{enumerate}

Hitherto we could model the distribution $f( \theta | \mathcal{O} )$ as the stable distribution of an infinite Markov Chain, using Metropolis-Hastings algorithm (further specified in part $C$). Due to ergodicity, we could approximate the integral in equation (1) by drawing samples, in accordance with Metropolis Hastings Criterion, from the the Markov Chain after it runs close to stability. Since for small $T$, $$\exp(\hat r_{\theta^{*}} / T) \gg \exp(\hat r_{\theta^{'}}/T) \ \forall \ \theta^{'} \neq \theta^{*},$$ the probability of drawing $\theta^{*}$ from the near-stable Markov chain is salient compared to other possible $\theta^{'}$s. As \textit{Lemma 1} in $B$ suggests, the probability for the Markov Chain to transition to a suboptimal $\theta^{'}$ from $\theta^{*}$ as $T$ goes to $0$:
$$ \lim_{T \rightarrow 0} Pr(\theta^{'} \rightarrow \theta^{*}) \rightarrow 0.$$
Therefore, the $n$ sample $\theta$s we draw mostly likely would match $\theta^{*}$, 
and we could obtain the optimal policy.

$$\pi ( a | s, \mathcal{O} ) \approx \sum_{i=1}^n \frac{\pi(a|s, \theta)}{n} = \sum_{i=1}^n \frac{\pi(a|s, \theta^{*})}{n} = \pi(a|s, \theta^{*})$$


\subsection{Justification of correctness and optimality}

We prove that the on policy Metropolis-Hasting algorithm stated above correctly estimates the neural network parameter $\theta$ that maximizes the expected reward $ \hat r = \mathbb{E}_{s, a, s' \sim Pr(s) \pi ( a | s, \theta ) Pr(s' | s, a)} [ R (s, a, s') ] $ as $T\to 0$.

\begin{lemma}
Let $f:\theta \to \mathbb{R}$ be a continuous function with a maximum value at $\theta^*$, then 
$$ \frac{\phi(\theta)e^{\frac{f(\theta)}{T}}}{\int_{\theta'} \phi(\theta') e^{\frac{f(\theta')}{T}}d\theta'} $$
converges to 0 as $T\to 0$ whenever $f(\theta)$ is not maximal.
\end{lemma}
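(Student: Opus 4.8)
The plan is to show that for every fixed $\theta$ with $f(\theta) < f(\theta^*)$ the displayed ratio is dominated by a term that decays exponentially in $1/T$. Write $M := f(\theta^*) = \max_{\theta'} f(\theta')$. Controlling the numerator $\phi(\theta)e^{f(\theta)/T}$ is trivial; all the work lies in lower-bounding the normalizing integral $\int_{\theta'}\phi(\theta')e^{f(\theta')/T}\,d\theta'$ by a quantity that, as $T\to 0^+$, grows strictly faster than the numerator. The mechanism is the standard Laplace/concentration idea: the mass of a Boltzmann density collapses onto the argmax, so a small neighborhood of $\theta^*$ already contributes more than any fixed suboptimal point.

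First I would fix $\epsilon>0$ small enough that $f(\theta) < M-\epsilon$, which is possible precisely because $f(\theta)<M$ strictly. By continuity of $f$ at $\theta^*$ there is an open neighborhood $B$ of $\theta^*$ on which $f(\theta')>M-\epsilon/2$. Restricting the integral to $B$ then gives
\[
\int_{\theta'}\phi(\theta')e^{f(\theta')/T}\,d\theta'\ \ge\ \int_{B}\phi(\theta')e^{f(\theta')/T}\,d\theta'\ \ge\ e^{(M-\epsilon/2)/T}\int_{B}\phi(\theta')\,d\theta'.
\]
Writing $c_B := \int_B \phi(\theta')\,d\theta' > 0$, the ratio in the lemma is at most
\[
\frac{\phi(\theta)}{c_B}\,e^{\,(f(\theta)-M+\epsilon/2)/T}.
\]
Since $f(\theta)-M+\epsilon/2 < -\epsilon/2 < 0$, the exponent tends to $-\infty$ as $T\to 0^+$, so this upper bound, and therefore the (nonnegative) ratio itself, converges to $0$. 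That establishes the claim at every non-maximal $\theta$.

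The hard part is not the estimate but isolating the hypothesis that makes $c_B>0$: one needs the prior $\phi$ to place positive mass on every neighborhood of $\theta^*$, i.e. $\theta^*$ must lie in the support of $\phi$, which I would state as a standing assumption (it is automatic if $\phi$ is a density that is continuous and strictly positive at $\theta^*$). A secondary point is well-definedness of the expression: because $f\le M$ everywhere, $\int\phi(\theta')e^{f(\theta')/T}\,d\theta'\le e^{M/T}\int\phi(\theta')\,d\theta' < \infty$ for a probability density $\phi$, so no integrability hypothesis on $f$ beyond boundedness-at-the-max is required. I would close by noting that the convergence obtained is pointwise in $\theta$ (the choice of $\epsilon$ and $B$ depends on $\theta$); a uniform version on regions bounded away from the argmax would need compactness, but that is not needed for the use of the lemma in Subsection C.
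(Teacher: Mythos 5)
Your proof is correct and follows essentially the same route as the paper's: both exploit continuity of $f$ at $\theta^*$ to find a neighborhood where $f$ exceeds the suboptimal value $f(\theta)$ by a fixed margin, so that the normalizing integral outgrows the numerator as $T\to 0^+$. Your version is slightly more careful in that it makes explicit the assumption $\int_B\phi(\theta')\,d\theta'>0$ (i.e.\ $\theta^*$ lies in the support of the prior), which the paper's proof uses implicitly when it asserts the integral diverges.
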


\begin{proof}
Note that $$ \frac{\phi(\theta)e^{\frac{f(\theta)}{T}}}{\int_{\theta'}\phi(\theta')e^{\frac{f(\theta')}{T}}d\theta'} =\frac{\phi(\theta)}{\int_{\theta'}\phi(\theta')e^{\frac{f(\theta')-f(\theta)}{T}}d\theta'}$$

Let $g(\theta')=f(\theta')-f(\theta)$. If $f(\theta)$ is not maximal, then $g(\theta^*)>0$. Since $g$ is continuous, there exists a neighborhood centered at $\theta^*$ such that $g(\theta')>\frac{1}{2}g(\theta^*)$ for all $\theta'$ in this neighborhood, so $\lim_{T\to 0}  \int_{\theta'}\phi(\theta')e^{\frac{f(\theta')-f(\theta)}{T}}d\theta'=\infty$, and the result follows.
\end{proof}

\bigskip
Note that at each $T$, the set of all $\theta$ is a discrete time Markov Chain with the transition probability between $\theta_1$ and $\theta_2$ being $\min(\frac{ \phi (\theta_2) \exp{( \hat r_{\theta_2} / T)} } { \phi (\theta_1) \exp{(\hat r_{\theta_1} / T)} }, 1)$. The stationary distribution of each $\theta$ is proportional to $\pi (\theta) = \phi (\theta) \exp{(\hat r_{\theta}/T)}$. As $T\to 0$, if $\hat r_{\theta}$ is not the maximum, then according to Lemma 1, $f(\theta | \mathcal{O}) \to 0$. Therefore, when $T$ is small, the Metropolis Hasting algorithm either remains in the current state or chooses a near optimal state with high probability.

\subsection{Algorithm Pseudo Code}

\begin{algorithm}
    \caption{On Policy Metropolis Hastings}
    \label{alg:thealg}
    \begin{algorithmic} 
    
    \REQUIRE $n > 0, \Sigma \succ 0, T > 0, \epsilon \in (0, 1)$
    
    \STATE $\theta_{0} \sim \mathcal{N} (0, \Sigma)$
        
    \STATE $ \hat r_{0} = \mathbb{E}_{s, a, s' \sim Pr(s) \pi(a | s, \theta_{0}) Pr(s' | a, s)} [ R(s, a, s') ] $
    
    \FOR{ $i \in \{1, \hdots, n\}$  }
    
        \STATE $\theta' \sim \mathcal{N} (\theta_{i - 1}, \Sigma)$
        
        \STATE $ \hat r' = \mathbb{E}_{s, a, s' \sim Pr(s) \pi(a | s, \theta') Pr(s' | a, s)} [ R(s, a, s') ] $
        
        \IF{ $\hat r' > \hat r_{i - 1}$}
        
             \STATE $ \theta_{i}, \hat r_{i} \leftarrow \theta', \hat r' $
             
        \ELSE
        
            \STATE $\alpha \leftarrow \frac{ \phi (\theta') \exp{( \hat r' / T)} } { \phi (\theta_{i - 1}) \exp{(\hat r_{i - 1} / T)} } $
            
            \STATE $ p \sim \mathcal{U} (0, 1) $
            
            \IF{ $\alpha \geq p$ }
            
                \STATE $ \theta_{i}, \hat r_{i} \leftarrow \theta', \hat r' $
            
            \ELSE
            
                \STATE $\theta_{i}, \hat r_{i} \leftarrow \theta_{i - 1}, \hat r_{i - 1}$
                
            \ENDIF
            
        \ENDIF
        
        \STATE $T \leftarrow \epsilon T$
    
    \ENDFOR
    
    \RETURN $ (\theta_{0}, \hdots, \theta_{n}) $
        
    \end{algorithmic}
\end{algorithm}

In this section we present the on policy derivation of the Metropolis-Hastings algorithm (Algorithm~\ref{alg:thealg}), which estimates the optimal policy parameters sampled from $f( \theta | \mathcal{O} )$. Averaging the policy resulting from many of these samples $\pi ( a | s, \theta )$ estimates the optimal policy $\pi ( a | s, \mathcal{O} )$.

Since the probability of transitioning away from the optimal parameters $\lim_{T \rightarrow 0} Pr(\theta^{*}, \theta^{'}) \rightarrow 0$ decays to zero, the optimal policy $\pi ( a | s, \mathcal{O} ) = \pi(a|s, \theta^{*})$ is simply the policy evaluated at the optimal parameter values.



\section{Experiments}

In this section, we design experiments to test Algorithm~\ref{alg:thealg}. We hope to test the algorithm in three domains: a randomly generated MDP toy problem, the OpenAI gym \texttt{CartPole-v0} environment, and in the Flow Traffic simulator \cite{wu2017}. In this section, the \textbf{temperature} refers to the variable $T$ in Algorithm~\ref{alg:thealg}, and the \textbf{cooling rate} refers to $\epsilon$ in Algorithm~\ref{alg:thealg}. The number of \textbf{iterations} refers to $n$, the number of times to simulate the Markov chain. Furthermore, when estimating the expected reward and expected future reward, the \textbf{batch size} refers to the number of samples to average. Finally, the \textbf{buffer size} hyper parameter is specific to the Gym experiment, which refers to the number of samples to collect from the environment, before random choosing exactly \textbf{batch size} of those samples. This hyper parameter serves to decorrelate the reward samples.

\subsection{Randomly Generated MDP}

The first experiment to test Algorithm~\ref{alg:thealg} is with a randomly generates Markov Decision Process. In this environment, the number of states is a hyper parameter $|S|$ and the number of actions is a hyper parameter $|A|$. The transition dynamics $Pr(s' | s, a) \in \mathcal{R}^{|S| \times |S| \times |A|}$ are randomly generated, and normalized to be a probability distribution. Finally, the reward function $R(s, a, s') \in \mathcal{R}^{|S| \times |A| \times |S|}$ is randomly generated to have positive and negative values.

At every iteration, the agent is given a reward according to the reward function. The task is to estimate the policy $\pi (a | s, \theta)$ that maximizes the expected reward. This task serves as a proof of concept toy example to verify the prospects of Algorithm~\ref{alg:thealg}. We plot the expected reward of the current policy for both the traditional policy gradient algorithm, and Algorithm~\ref{alg:thealg}. We also plot the difference in running time of each technique.

The policy gradient has a learning rate of 0.1, and a batch size of 512. Algorithm~\ref{alg:thealg} is instantiated with an initial temperature of 1.0, a cooling rate of 0.999, and a batch size of 512. In both cases the policy is optimized / sampled for 10000 iterations. The experiment is repeated 10 times, and the mean and standard deviation across trials is taken.

\subsection{Open AI Gym Tasks}

The second experiment to test Algorithm~\ref{alg:thealg} is using the OpenAI gym \texttt{CartPole-v0} environment. In this environment, the state space is continuous valued, with four slots $s \in \mathcal{R}^{4}$, organized by Table~\ref{fig:cartpole_table}.

\begin{figure}[h]
    \centering
    \begin{tabular}{r|l|r|r}
    Num & Observation & Min & Max \\
    \hline
    0 & Cart Position & -2.4 & 2.4\\
    0 & Cart Velocity & -Inf & Inf\\
    0 & Pole Angle & $-41.8^{\circ}$ & $41.8^{\circ}$ \\
    0 & Pole Velocity At Tip & -Inf & Inf\\
\end{tabular}
    \caption{Example of the \texttt{CartPole-v0} environment in OpenAI gym.}
    \label{fig:cartpole_table}
\end{figure}

At every non terminal iteration, the agent is given a reward of 1.0. The action space for \texttt{CartPole-v0} is discrete, with two possible values $a \in \{0, 1\}$: force applied left, and force applied right. The task is to estimate the policy $\pi (a | s, \theta)$ that maximizes the expected future reward. We plot the expected future reward of the current policy for both the traditional policy gradient algorithm, and Algorithm~\ref{alg:thealg}. We also plot the difference in running time of each technique.

The policy gradient has a learning rate of 0.1, a buffer size of 10000, and a batch size of 512. Algorithm~\ref{alg:thealg} is instantiated with an initial temperature of 1.0, a cooling rate of 0.9, a buffer size of 10000, and a batch size of 512. In both cases the policy is optimized / sampled for 1000 iterations. The experiment is repeated 10 times, and the mean and standard deviation across trials is taken. 

\begin{figure}[h]
    \centering
    \includegraphics[width=0.6\linewidth]{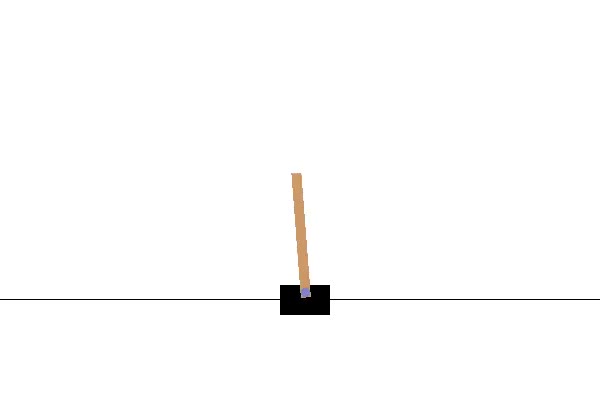}
    \caption{The \texttt{CartPole-v0} environment in OpenAI gym.}
    \label{fig:cartpole}
\end{figure}




\section{Analysis}

\subsection{Randomly Generated MDP}

\begin{figure}[h]
    \centering
    \includegraphics[width=0.7\linewidth]{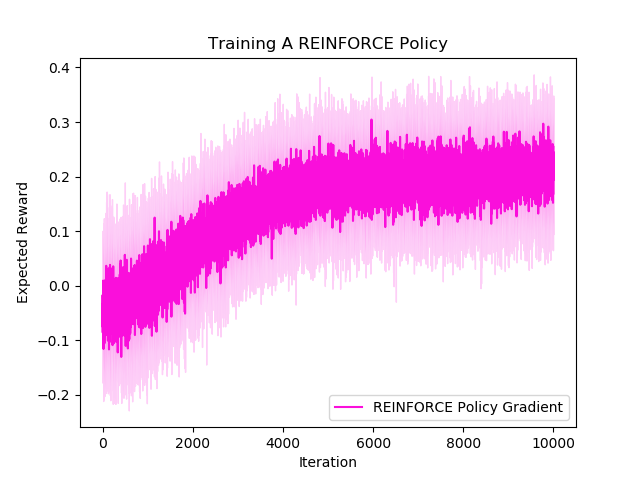}
    \caption{Training a stochastic policy using the reinforce policy gradient algorithm. The expected reward attained by the policy is plotted against the training iteration.}
    \label{fig:random_mdp_rl}
\end{figure}

\begin{figure}[h]
    \centering
    \includegraphics[width=0.7\linewidth]{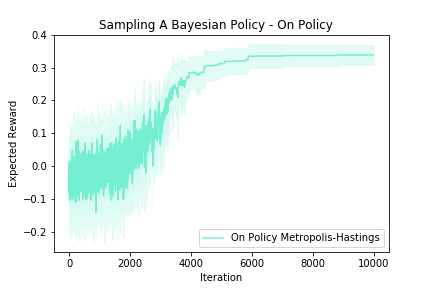}
    \caption{Sampling the parameters for a stochastic policy using the On Policy Metropolis Algorithm. The expected reward attained by the policy is plotted against the training iteration.}
    \label{fig:random_mdp_mh}
\end{figure}

As shown in Fig. 4, the On Policy Metropolis Hastings algorithm starts out with high temperature $T$, since acceptance probability for $(\hat r' - \hat r) < 0$:
$$\alpha = \frac{ \phi (\theta') e^{ (\hat r' - \hat r)/ T} } { \phi (\theta)}$$
 which is higher with larger $T$, facilitating transitions and resulting in the high variance. As the number of iterations increments, the On Policy MH algorithm explores different $\theta$s and transitions to policies with higher reward $r_{\theta\{MH\}}$. The variance starts to decrease at around $3000$ iterations and at around $4000$ iterations starts to approach optimality, with standard deviation $\sigma_{r_{\theta\{MH\}}}$ bounded within $0.02$. On the other hand, the Reinforcement learning algorithm yields policies with policies with similar optimal reward values, yet the standard deviation $\sigma_{r_{\theta\{RL\}}}$ remains high even after $4000$ iterations.
 
\begin{figure}[h]
    \centering
    \includegraphics[width=0.6\linewidth]{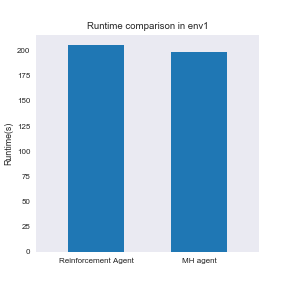}
    \caption{Average runtime difference between the policy gradient algorithm, and Algorithm~\ref{alg:thealg}. The policy gradient requires more time to compute an update that Algorithm~\ref{alg:thealg}.}
    \label{fig:env1_runtime}
\end{figure}

For these experiments, we calculate the total time required to complete the entire 10000 iterations. Our results demonstrate that the policy gradient requires more time finish the experiment (210 seconds), compared to Algorithm~\ref{alg:thealg} (190 seconds). This suggests that the On Policy Metropolis Hastings algorithm is more computationally efficient at finding the optimal policy that the policy gradient.



\subsection{Open AI Gym Tasks}

\begin{figure}[h]
    \centering
    \includegraphics[width=0.7\linewidth]{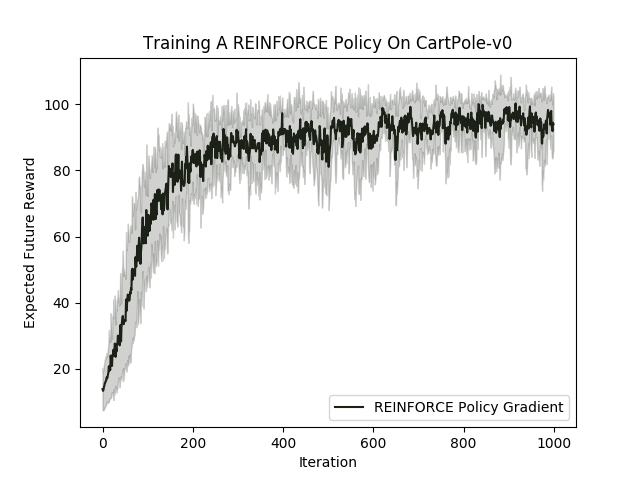}
    \caption{Training a stochastic policy using the reinforce policy gradient algorithm. The expected reward attained by the policy is plotted against the training iteration.}
    \label{fig:gym_rl}
\end{figure}

\begin{figure}[h]
    \centering
    \includegraphics[width=0.7\linewidth]{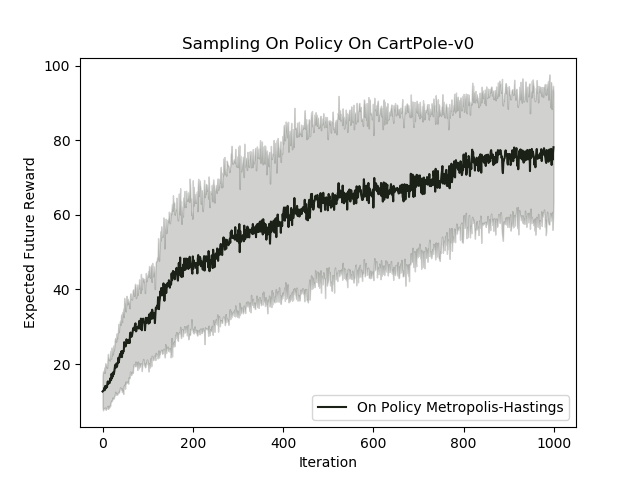}
    \caption{Sampling the parameters for a stochastic policy using the On Policy Metropolis Algorithm. The expected reward attained by the policy is plotted against the training iteration.}
    \label{fig:gym_mh}
\end{figure}

The On Policy Metropolis Hastings algorithm converges to a reward of about 80 around iteration 800 as is shown in the Fig. 6. Yet since temperature is an important hyper parameter that would help the convergence of $\hat r_{theta}$, the graph, which was a result of the MH algorithm without further hyper parameter tuning, the variance of the expected future reward is relatively high compared to RL agent. Nonetheless the reward convergence values of MH are comparable to that of RL agent and MH agent also has the runtime advantage (further described in \textit{VI.B.})

\begin{figure}[h]
    \centering
    \includegraphics[width=0.5\linewidth]{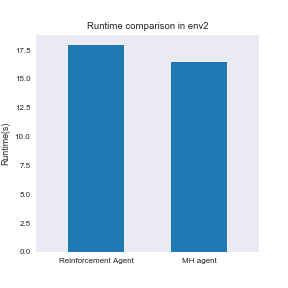}
    \caption{Average runtime difference between the policy gradient algorithm, and Algorithm~\ref{alg:thealg}. The policy gradient requires more time to compute an update that Algorithm~\ref{alg:thealg}.}
    \label{fig:env2_runtime}
\end{figure}

Similar to the last section, we calculate the total time required to complete the 100 training steps. Our results demonstrate that the policy gradient requires more time to finish the experiment (17.5 seconds), compared to Algorithm~\ref{alg:thealg} (16.0 seconds). This also suggests that the On Policy Metropolis Hastings algorithm is more computationally efficient than the policy gradient at updating the parameters.

\section{Discussion}

The On Policy MH Algorithm has many hyper parameters that can be tuned to converge to the optimal policy faster and with less variance. Tuning the hyper parameters is difficult and computationally intensive. Running the experiment multiple times takes several hours and thus it is difficult to find out better hyper parameters. Therefore, an efficient scheme for searching for and locating the optimal hyper parameters would increase the success of On Policy MH. 

The On Policy MH Algorithm depends on having an accurate estimate of the expected future reward, and this may not be available in certain environments, where the rewards samples have noise, or high variance. In memory constrained settings, storing a buffer of previous rewards may not be practical or scalable to larger environments. Therefore, methods for estimating the expected future reward, such as generalized advantage estimation \cite{gae} would increase the success of On Policy MH. 

We derived an off-policy version of the On Policy MH Algorithm in section III.A, but in practice this algorithm does not increase the expected future reward of the current policy. We believe that the introduction of importance sampling is biasing the estimate of the expected future reward, and preventing the algorithm from converging to an optimal policy.

\section{Conclusion}

From the analysis, we can conclude that On Policy Metropolis Hastings is able to converge a stochastic control policy that achieves comparable expected future reward to traditional model-free reinforcement learning. The algorithm does so with comparable variance to gradient based learning, while requiring less computation per update. With appropriately chosen hyper parameters, which remains a challenge by itself, we have proven our algorithm converges to the optimal stochastic policy. Therefore, our proposed solution is a viable alternative to traditional model-free reinforcement learning, which is only locally-optimal at best.

\bibliography{references}
\bibliographystyle{ieeetr}

\end{document}